\newtheorem{theorem}{Theorem}
\newtheorem{lemma}{Lemma}
\newtheorem{proposition}{Proposition}
\theoremstyle{definition}
\newtheorem{definition}{Definition}
\newcommand{\domain}{\ensuremath{\mathcal{D}}}
\newcommand{\arguments}{\ensuremath{\mathcal{A}}}
\newcommand{\attacker}{\ensuremath{\mathrm{Att}}}
\newcommand{\supporter}{\ensuremath{\mathrm{Sup}}}
\newcommand{\baseScore}{\ensuremath{\beta}}
\newcommand{\strength}{\ensuremath{\sigma}}
\newcommand{\strengthMLP}{\ensuremath{\sigma_{\textrm{MLP}}}}
\newcommand{\strengthcMLP}{\ensuremath{\sigma_{\textrm{cMLP}}}}
\newcommand{\discMLP}{\ensuremath{u_{\textrm{MLP}}}}
\newcommand{\contMLP}{\ensuremath{f^{\textrm{MLP}}}}
\newcommand{\diff}[1]{\ensuremath{\frac{\mathrm{d}#1}{\mathrm{d}t}}}
\title{Interpreting Neural Networks as Quantitative Argumentation Frameworks }
\author{
Nico Potyka \\
University of Stuttgart\\ 
Universitätsstraße 32\\
70569 Stuttgart, Germany\\
nico.potyka@ipvs.uni-stuttgart.de
}
\begin{document}

\maketitle

\begin{abstract}
We show that an interesting class of feed-forward neural networks can be understood as quantitative argumentation frameworks. 
This connection creates a bridge between research in Formal Argumentation and Machine Learning.
We generalize the semantics of feed-forward neural networks to acyclic graphs and study the resulting
computational and semantical properties in argumentation
graphs. As it turns out, the semantics gives stronger guarantees than existing semantics 
that have been tailor-made for the argumentation setting. 
From a machine-learning perspective, the connection does not seem immediately helpful. While it gives intuitive meaning
to some feed-forward-neural networks, they remain difficult to understand due to their size and density.
However, the connection seems helpful for combining background knowledge in form of
sparse argumentation networks with dense neural networks that have been trained for complementary purposes
and for learning the parameters of quantitative argumentation frameworks in an end-to-end fashion from data.
\end{abstract}

\section{Introduction}

In this paper, we establish a relationship between neural networks and abstract argumentation frameworks. 
More precisely, we study relationships between quantitative bipolar argumentation frameworks (QBAFs) and multilayer perceptrons (MLPs).
QBAFs are a knowledge representation formalism that can be used to solve 
decision problems in a very intuitive way by weighing up pro and contra arguments \cite{baroni2015automatic,rago2016discontinuity,amgoud2017evaluation}.
QBAFs and their variants have been combined with machine learning methods in order to add explainability to problems like
product recommendation \cite{rago2018argumentation}, review aggregation \cite{cocarascu2019extracting} and stance aggregation in fake news detection \cite{kotonya2019gradual}.
Multilayer perceptrons (MLPs) \cite{goodfellow2016deep} are a very flexible class of feed-forward neural networks that can be applied
in basically all machine learning tasks. This includes applications like
classification \cite{heidari2019efficient}, regression \cite{hiransha2018nse} and function approximation in reinforcement learning \cite{tesauro1995temporal}. 

We explain the basics of QBAFs and MLPs in Sections \ref{sec_QA_basics} and \ref{sec_MLP_basics}, respectively.
In Section \ref{sec_MLP_semantics}, we introduce an MLP-based semantics for QBAFs that is
based on computing the strength of arguments in an iterative way. In acyclic graphs, the result is 
equal to the result of the usual evaluation procedure (forward propagation) for MLPs. We give sufficient
conditions for convergence of this procedure in cyclic graphs and analyze the convergence rate.
Simply put, convergence is guaranteed when the edge weights and the indegree of arguments is not too large.
We give an example that demonstrates that our convergence conditions cannot be improved without adding additional
assumptions about the structure of the graph. In order to improve the guarantees, we introduce
a continuous variant that agrees with its discrete counterpart in the known convergence cases, but still
converges in more general cases. Finally, we show that the MLP-based semantics satisfies all properties
for QBAF semantics proposed in \cite{amgoud2017evaluation,potyka2018Kr,Potyka19om} almost perfectly. 
This is surprising because it actually gives
stronger semantical guarantees than some semantics that have been designed specifically for QBAFs.
We close the paper with some ideas about how this relationship can be exploited to
combine ideas for  QBAFs and neural networks fruitfully for both fields. 

\section{QBAF Basics}
\label{sec_QA_basics}

In this work, our conceptual understanding of an argument follows Dung's notion of \emph{abstract argumentation}:
"an argument is an abstract entity whose role is solely determined by its relations to other arguments" \cite{dung1995acceptability}. 
That is, we abstract from the content of arguments and focus on their acceptability dependent on
the acceptability of their attackers and supporters.
This idea can be formalized in different ways, we refer to \cite{baroni2018abstract} for an overview of some classical approaches. 
Here, we consider quantitative bipolar argumentation frameworks (QBAFs) similar to \cite{baroni2018many}.
In general, these frameworks interpret arguments by values from an arbitrary domain $\domain$. 
For simplicity, we assume that $\domain = [0,1]$. Intuitively, the value $0$ means that an argument
is fully rejected, $1$ means that it is fully accepted and values in between balance between these extremes.
\begin{definition}[QBAF]
A QBAF (over $\domain = [0,1]$) is a quadruple
$(\arguments, \attacker, \supporter, \baseScore)$ consisting of 
a set of arguments $\arguments$, two binary relations
$\attacker$ and $\supporter$ called attack and support 
and a function $\baseScore: \arguments \rightarrow [0,1]$
that assigns a \emph{base score} $\baseScore(a)$
to every argument $a \in \arguments$.
\end{definition}
The base score can be seen as an apriori strength of an argument when it is evaluated independent of its relationships to other arguments.
This apriori strength will be adapted dynamically based on the strength of its attackers and supporters.
Graphically, we denote attack relations by solid and support relations by dashed edges
as illustrated in Figure \ref{fig:fig_QBAF} on the left.
\begin{figure}[tb]
	\centering
		\includegraphics[width=0.43\textwidth]{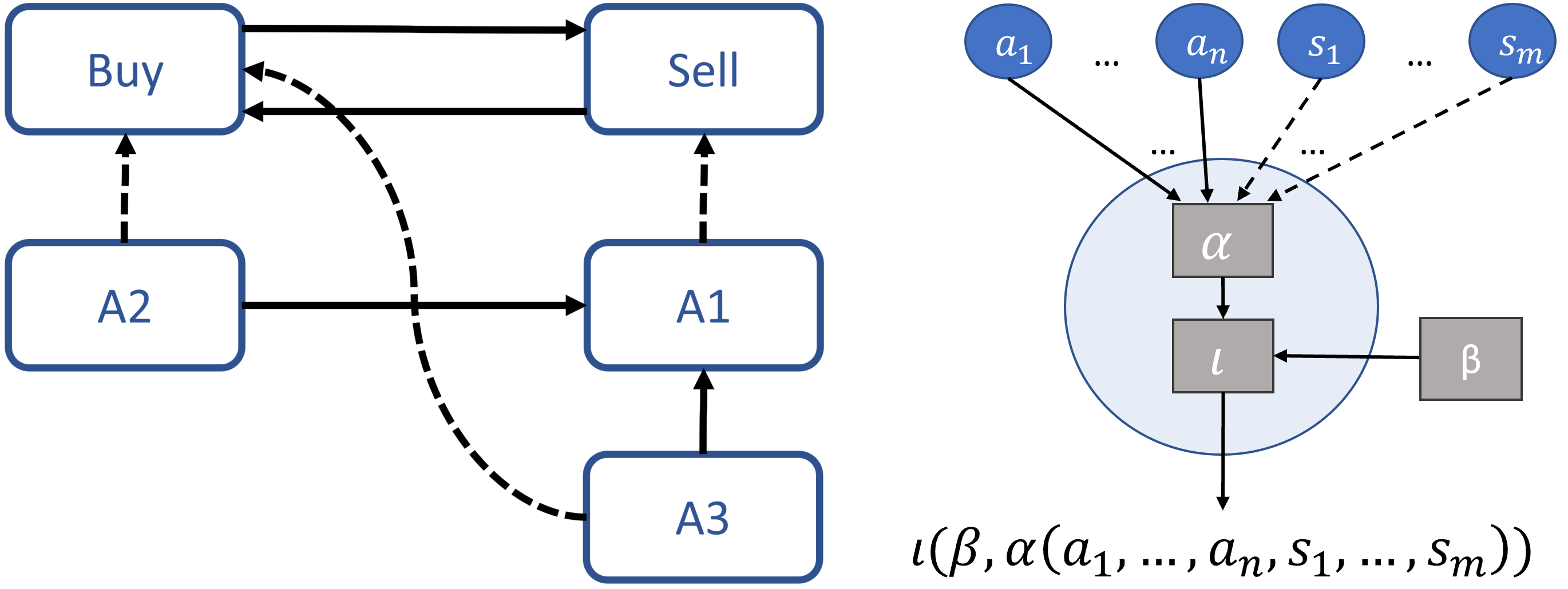}
	\caption{Example of a QBAF (left) and illustration of local update mechanics under modular semantics (right).}
	\label{fig:fig_QBAF}
\end{figure}
The QBAF models part of a decision problem from \cite{Potyka2018_tut}, where we want to decide whether to buy new or sell existing stocks of a company. 
A1 corresponds to the statement of an expert that recommends selling. A2 and A3 correspond to statements by experts who contradict the premises of A1 
and recommend buying. The selling and the buying decision are simply modeled as arguments that attack each other, 
so that the confidence in one decision will decrease the confidence in the other.

The main computational problem in QBAFs is to assign a strength value to arguments. 
We describe this process by interpretations. 
\begin{definition}[QBAF interpretation]
\label{def_QBAF_interpretation}
Let $Q $ be a QBAF over $[0,1]$.
An interpretation of $Q$ is a function $\strength: \arguments \rightarrow [0,1] \cup \{\bot\}$
and $\strength(a)$ is called the strength of $a$ for all $a \in \arguments$.
If $\strength(a) = \bot$ for some $a \in \arguments$, $\strength$ is called \emph{partial}. Otherwise,
it is called \emph{fully defined}.
\end{definition}
\emph{Modular semantics} define interpretations based on an iterative procedure \cite{mossakowski2018modular}.
For every argument, its strength is initialized with its base score. The strength values are then adapted
iteratively by applying an \emph{aggregation function} $\alpha$ and an \emph{influence function} $\iota$
as illustrated in Figure \ref{fig:fig_QBAF} on the right. The aggregation function $\alpha$ aggregates the strength
values of attackers and supporters. Aggregation functions have been based on product \cite{baroni2015automatic,rago2016discontinuity},
addition \cite{amgoud2017evaluation,potyka2018Kr} and maximum \cite{mossakowski2018modular}.
The influence function then takes the aggregate and the base score in order
to determine a new strength from the desired domain. Intuitively, supporters increase the strength, while attackers decrease it.
If the strength values converge, the limit defines the final strength value. Otherwise, strength values remain undefined
and the interpretation is partial. Of course, it would be desirable to always have fully defined interpretations. However, as
shown in \cite{mossakowski2018modular}, many update procedures can fail to converge in cyclic QBAFs. 
Properties for evaluating and comparing different semantics have been discussed in \cite{amgoud2017evaluation,baroni2018many,potyka2018Kr}.
We will explain these properties in detail later when we analyze neural networks as QBAFs.

\section{MLP Basics}
\label{sec_MLP_basics}

Intuitively, a multilayer perceptron (MLP) is a layered acyclic graph as sketched in Figure \ref{fig:fig_nns} on the left.
Formally, we describe MLPs as follows.
\begin{figure}
	\centering
		\includegraphics[width=0.45\textwidth]{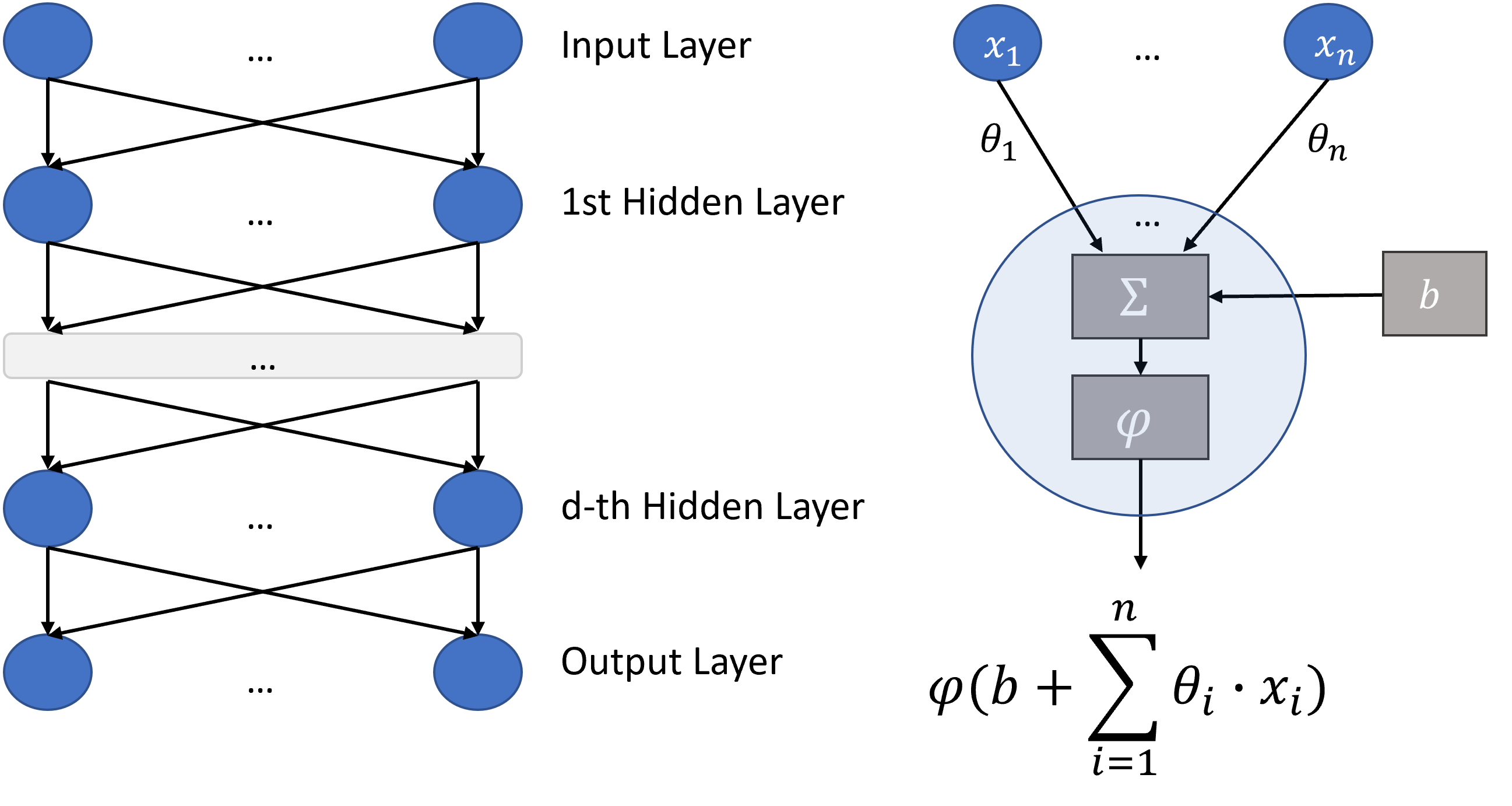}
	\caption{Graphical structure of an MLP (left) and illustration of local forward propagation (right). }
	\label{fig:fig_nns}
\end{figure}
\begin{definition}[MLP]
An MLP is a tuple $(V, E, B, \Theta)$, where 
\begin{itemize}
  \item $(V,E)$ is a directed graph. 
	\item $V = \uplus_{i=0}^{d+1} V_i$ is the disjoint union of sets of nodes $V_i$.
	\item We call $V_0$ the \emph{input layer}, $V_{d+1}$ the \emph{output layer} and $V_i$ the $i$-th \emph{hidden layer} for $1\leq i \leq d$.
	\item We call $d$ the depth of the network.
	\item $E \subseteq \bigcup_{i=0}^d \big(V_i \times V_{i+1} \big)$ is a set of edges between subsequent layers.
	 If $E = \bigcup_{i=0}^d \big(V_i \times V_{i+1} \big)$, the network is called \emph{fully connected}.
	\item $B: (V \setminus V_0) \rightarrow \mathbb{R}$ assigns a \emph{bias} to every non-input node.
	\item $\Theta: V \rightarrow \mathbb{R}$ assigns a \emph{weight} to every edge.
\end{itemize}
\end{definition}
In order to process an example, the input layer of an MLP is initialized with feature values of the example. 
These inputs are then propagated forward through the network
to generate an output in the output layer. For example, in a binary classification task, the output layer could consist 
of a single node whose value corresponds to the model's confidence that the example belongs to the class. 
The values at nodes in hidden layers and the output layer are computed by propagating the values from the input layer forward
through the network as sketched in Figure \ref{fig:fig_nns} on the right. 
Every edge is associated with a weight. For every ingoing edge $e_i$,
the corresponding weight $\theta_i=\Theta(e_i)$ is multiplied by the value $x_i$ of its source and the resulting values are summed up. 
The bias $b = B(v_j)$ of the edge's target $v_j$ is added and the result is fed into an activaction function $\varphi$. 
A popular choice to obtain values between $0$ and $1$ is the logistic activation function that is
defined by $\varphi_l(z) = \frac{1}{1 + \exp(-z)}$. The logistic function lost popularity since it can slow down gradient-based
training due to vanishing derivatives close to $0$ and $1$. However, recent ideas like batch normalization \cite{IoffeS15}
can mitigate the problem. In principle, the following ideas can be applied to other activation functions
like rectified linear units as well. However, values between $0$ and $1$ yield a particularly nice and simple interpretation.
We will therefore focus on logistic activation functions in the following.

\section{MLP-based Semantics for QBAFs}
\label{sec_MLP_semantics}

When comparing the update mechanics of QBAFs as sketched in Figure \ref{fig:fig_QBAF} on the right
with the forward propagation mechanics of MLPs as sketched in Figure \ref{fig:fig_nns} on the right,
we see that they are very similar. Roughly speaking, we can view an MLP as a QBAF where the aggregation
function $\alpha$ is based on addition and the influence function $\iota$ is based on a neural network activation function.
It is then natural to ask, does this QBAF give meaningful guarantees from an argumentation perspective?
In order to answer this question, we consider edge-weighted QBAFs as already considered in \cite{mossakowski2018modular}.
We consider only one set of edges and regard edges with negative weights as attacks and edges with positive weights as supports.
This simplifies making the connection between MLPs and QBAFs, but may not be appropriate in more general settings where
the aggregation function is not based on addition.
\begin{definition}[Edge-weighted QBAF]
An edge-weighted QBAF (over $\domain = [0,1]$) is a quadruple
$(\arguments, E, \baseScore, w)$ consisting of 
a set of arguments $\arguments$, edges $E \subseteq \arguments \times \arguments$ between these
arguments, a function $\baseScore: \arguments \rightarrow [0,1]$
that assigns a \emph{base score} $\baseScore(a)$
to every argument $a \in \arguments$ and a function $w: E \rightarrow \mathbb{R}$ that assigns a weight
to every edge.
\end{definition}
To simplify the presentation, we assume that $\arguments = \{1, 2, \dots, n\}$ in the following.
That is, the names of arguments correspond to numbers.
Furthermore, for every argument $a \in \arguments$, we let $\attacker(a) = \{(b,a) \in E \mid w(b,a) < 0\}$ and
$\supporter(a) = \{(b,a) \in E \mid w(b,a) > 0\}$.

In order to interpret the arguments in an edge-weighted QBAF, we consider a modular semantics based on the relationship
between QBAFs and MLPs noted earlier. 
The strength values are computed iteratively.
In every iteration, we have a strength vector $s^{(i)} \in [0,1]^n$. Its $a$-th element $s_a^{(i)}$ is the strength value
of argument $a$ in the $i$-th iteration. 
For every argument $a \in \arguments$, we let $s_a^{(0)} := \baseScore(a)$
be the initial strength value. The strength values are then updated by doing the following two steps repeatedly for all
$a \in \arguments$:
\begin{description}
\item[Aggregation:] We let $\alpha_a^{(i+1)} := \sum_{(b,a) \in E} w(b,a) \cdot s_b^{(i)}$.
\item[Influence:] We let $s_a^{(i+1)} := \varphi_l\big(\ln(\frac{\baseScore(a)}{1- \baseScore(a)}) + \alpha_a^{(i+1)} \big)$,
where $\varphi_l(z) = \frac{1}{1 + \exp(-z)}$ is the logistic function.
\end{description} 
Strictly speaking, the influence function is undefined for $\baseScore(a) \in \{0,1\}$. However, we can complete the definition by
using the infinite limits at these points. That is, we let $\ln(0) := - \infty$, $\ln(\frac{1}{0}) := \infty$,
$\varphi_l(-\infty)=0$, $\varphi_l(\infty)=1$ and for all $x \in \mathbb{R}$, $x - \infty = -\infty$ and $x + \infty = \infty$.
In this way, the composition of the aggregation and influence function is continuous and always returns values from the 
closed interval $[0,1]$.
By putting the definition of the aggregation function into the influence function, we obtain the explicit form of the update function $\discMLP: [0,1]^n \rightarrow [0,1]^n$ whose $i$-h component is defined by
\begin{equation}
\label{eq_discrete_update_formula}
\frac{1}{1 + \frac{1-\baseScore(i)}{\baseScore(i)}\exp(-\sum_{(b,i) \in E} w(b,i) \cdot s_b)}.
\end{equation}
Note that $s^{(k)} = \discMLP^k(s^{(0)})$, that is, $s^{(k)}$ is obtained from $s^{(0)}$ by applying $\discMLP$ $k$ times.
The MLP-based semantics is defined based on the result of applying the aggregation and influence function repeatedly.
\begin{definition}[MLP-based Semantics]
Let $Q $ be an edge-weighted QBAF over $[0,1]$. The interpretation of $Q$ under MLP-based semantics
is defined by 
$$\strengthMLP(a) =\begin{cases}
	\lim_{k \rightarrow \infty} s_a^{(k)} & \textrm{if the limit exists} \\
	\bot & \textrm{otherwise} 
\end{cases}
$$
for all $a \in \arguments$.
\end{definition}
In order to illustrate the definition, Figure \ref{fig:fig_QBAF_evaluated} shows the interpretation of our example
QBAF from Figure \ref{fig:fig_QBAF} for two different instantiations of edge weights.
\begin{figure}[tb]
	\centering
		\includegraphics[width=0.45\textwidth]{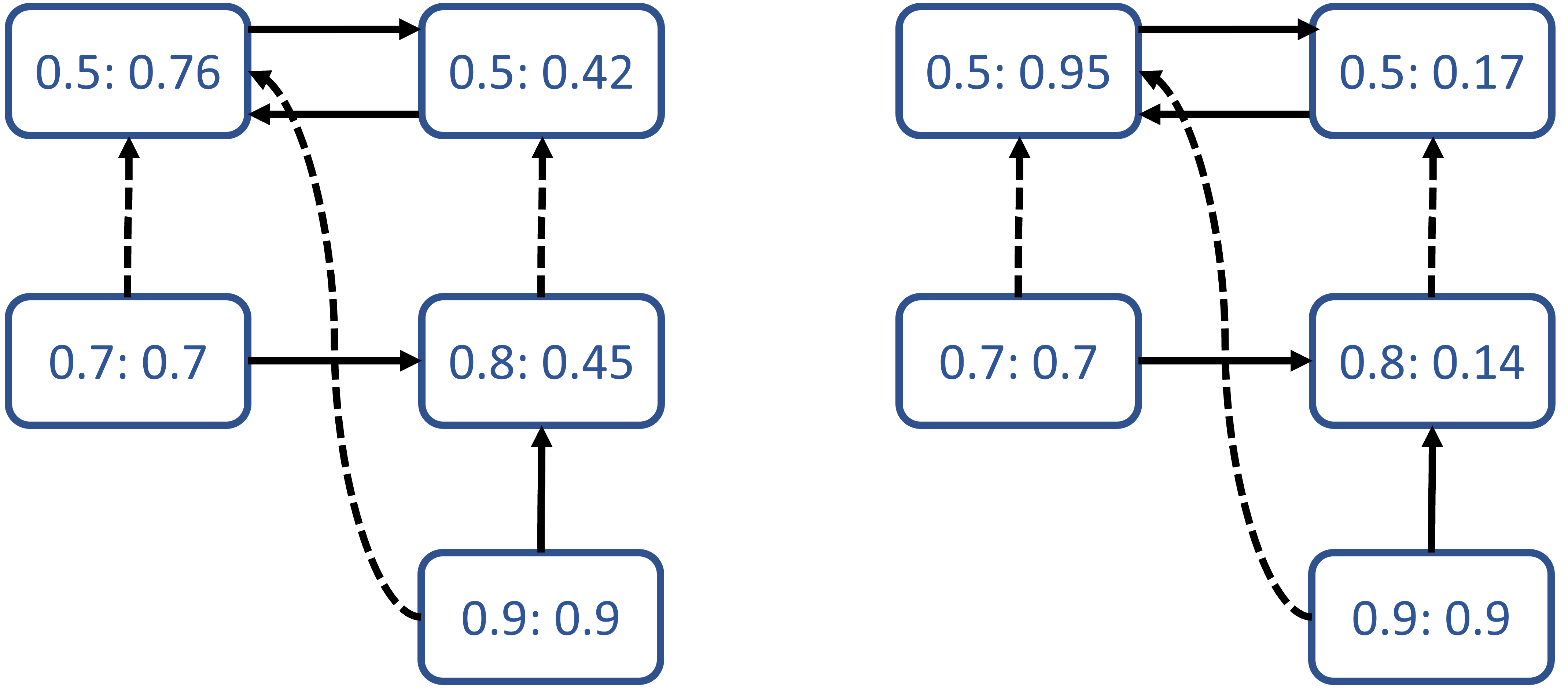}
	\caption{MLP-based interpretation of the QBAF from Figure \ref{fig:fig_QBAF}. The nodes are annotated with (base score: strength). The edge weights are $s$ for supports and $-s$ for attacks,
	where $s=1$ on the left and $s=2$ on the right. }
	\label{fig:fig_QBAF_evaluated}
\end{figure}

As we explain in the following proposition, if the MLP-based semantics is fully defined, then it corresponds to a fixed-point of
the update function $\discMLP$. This observation will be important later to study semantical properties.
\begin{proposition}
\label{prop_disc_limit_is_fixed_point}
If $\strengthMLP$ is fully defined, then $s^* = \lim_{k \rightarrow \infty} s^{(k)}$ is a fixed-point of 
$\discMLP$, i.e., $\discMLP(s^*) = s^*$.
\end{proposition}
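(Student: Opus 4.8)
The plan is to exploit the fact that $s^* = \lim_{k\to\infty} s^{(k)}$ is the limit of the orbit of a single continuous self-map, so that the fixed-point property drops out of interchanging the limit with the map. Concretely, I would first argue that $\discMLP : [0,1]^n \to [0,1]^n$ is continuous. Each component of $\discMLP$ is given by Equation~\eqref{eq_discrete_update_formula}, which is the composition of the affine aggregation $s \mapsto \sum_{(b,i)\in E} w(b,i)\cdot s_b$ with the logistic influence function; both are continuous, and hence so is their composition. At the boundary cases $\baseScore(i)\in\{0,1\}$ continuity still holds with the limit conventions already fixed in the text (this is exactly the remark that the composed aggregation--influence map ``is continuous and always returns values from the closed interval $[0,1]$''). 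Thus $\discMLP$ is a continuous map of the compact cube $[0,1]^n$ into itself.

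Second, I would unpack what ``$\strengthMLP$ fully defined'' gives us. By definition it means that for every argument $a$ the limit $s^*_a = \lim_{k\to\infty} s_a^{(k)}$ exists in $[0,1]$. Since $n$ is finite, componentwise convergence is the same as convergence of the vectors $s^{(k)}$ to $s^*$ in $[0,1]^n$, so $s^{(k)} \to s^*$.

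The core step is then a one-line limit interchange. By construction of the iteration we have $s^{(k+1)} = \discMLP\big(s^{(k)}\big)$ for every $k$. Taking $k \to \infty$ on both sides, the left-hand side converges to $s^*$ because the shifted sequence $(s^{(k+1)})_k$ is just the tail of $(s^{(k)})_k$ and therefore has the same limit; the right-hand side converges to $\discMLP(s^*)$ by continuity of $\discMLP$ together with $s^{(k)} \to s^*$. Equating the two limits yields $s^* = \discMLP(s^*)$, which is exactly the claimed fixed-point property.

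I do not expect a genuine obstacle here: the argument is the standard ``limit of an orbit of a continuous map is a fixed point'' fact. The only point that needs a little care is continuity of $\discMLP$ at base scores $0$ and $1$, where Equation~\eqref{eq_discrete_update_formula} involves the infinite limits of the logistic function; but the conventions introduced just before the definition of $\discMLP$ were set up precisely to make the map continuous there, so I would simply invoke them rather than reprove continuity from scratch.
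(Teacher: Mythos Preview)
Your proposal is correct and follows essentially the same approach as the paper: both arguments rest on the continuity of $\discMLP$ (with the boundary conventions for $\baseScore(i)\in\{0,1\}$) and then interchange the limit with the map, the paper writing this as the chain $\discMLP(s^*) = \discMLP(\lim_k \discMLP^k(s^{(0)})) = \lim_k \discMLP^{k+1}(s^{(0)}) = s^*$ while you phrase it as passing to the limit in $s^{(k+1)} = \discMLP(s^{(k)})$. Your version is slightly more explicit about componentwise versus vector convergence and about why continuity holds, but the underlying argument is identical.
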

\begin{proof}
See appendix.
\end{proof}
There are two main questions that we want to answer for a new modular semantics. 
The first question is, under which conditions does the iterative computation of strength values converge?
That is, for which families of QBAFs is the MLP-based semantics fully defined and are there families
for which it is not?
The second questions is, if the MLP-based semantics defines strength values, do they satisfy meaningful semantical
properties?
We will look at both questions in turn.

\subsection{Convergence Guarantees}

The following theorem explains some sufficient conditions under which the MLP-based semantics is fully defined.
The proofs build up on general results about modular semantics developed in \cite{potyka_modular_2019}.
\begin{theorem}
\label{theo_convergence_guarantees}
Let $Q $ be an edge-weighted QBAF over $[0,1]$.
\begin{enumerate}
	\item If $Q $ is acyclic, then $\strengthMLP$ is fully defined and, for all $a \in \arguments$, $\strengthMLP(a)$ can
		be computed in linear time.
	\item If all arguments in $Q$ have at most $P$ parents, the weight of all edges is bounded from above by $W$
	 and we have $W \cdot P < 4$, then $\strengthMLP$ is fully defined. 
	Furthermore, $|\strengthMLP(a) - s_a^{(n)}| < \epsilon$ whenever $n > \frac{\log \epsilon}{\log W + \log P - \log 4}$.
\end{enumerate}
\end{theorem}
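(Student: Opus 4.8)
The plan is to treat the two parts separately: the acyclic case by a direct stabilization argument, and the bounded-degree case by exhibiting \discMLP{} as a contraction and invoking the Banach fixed-point theorem.

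For part (1), I would fix a topological ordering of \arguments{} compatible with $E$ (possible since $Q$ is acyclic). Arguments with no incoming edges have an empty aggregate, so $s_a^{(i)} = \varphi_l(\ln(\frac{\baseScore(a)}{1-\baseScore(a)})) = \baseScore(a)$ for every $i \ge 0$; their strength is fixed immediately. Proceeding along the topological order, once all parents of an argument $a$ have reached their final value, the update formula \eqref{eq_discrete_update_formula} assigns $a$ its final value in the next iteration. By induction on the topological rank, every argument stabilizes after finitely many iterations, so the sequence $s^{(k)}$ is eventually constant, the limit exists, and \strengthMLP{} is fully defined. For the complexity claim I would not iterate at all but compute strengths in a single sweep in topological order: each argument's aggregate sums over its incoming edges, so the total work is $O(|\arguments| + |E|)$, i.e.\ linear in the size of $Q$.

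For part (2), the key is a Lipschitz bound on \discMLP{} in the maximum norm $\|\cdot\|_\infty$. The logistic function satisfies $\varphi_l'(z) = \varphi_l(z)(1-\varphi_l(z))$, which attains its maximum $\tfrac14$ at $z=0$; hence $|\varphi_l'(z)| \le \tfrac14$ everywhere. Writing the $i$-th component of the update as $\varphi_l(\ln(\frac{\baseScore(i)}{1-\baseScore(i)}) + \sum_{(b,i)\in E} w(b,i)\, s_b)$ and applying the mean value theorem to this component, for any $s, s' \in [0,1]^n$ I obtain
$$|\discMLP(s)_i - \discMLP(s')_i| \le \tfrac14 \sum_{(b,i)\in E} |w(b,i)|\,|s_b - s_b'| \le \tfrac14 \cdot P \cdot W \cdot \|s - s'\|_\infty,$$
where the last step uses that $i$ has at most $P$ parents and $|w(b,i)| \le W$. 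Taking the maximum over $i$ gives $\|\discMLP(s) - \discMLP(s')\|_\infty \le \frac{WP}{4}\,\|s-s'\|_\infty$, so with $\lambda := \frac{WP}{4}$ the assumption $WP < 4$ makes \discMLP{} a contraction with factor $\lambda < 1$ on the complete metric space $([0,1]^n, \|\cdot\|_\infty)$. (The boundary cases $\baseScore(i)\in\{0,1\}$ only make the $i$-th coordinate constant, which is harmless.) By the Banach fixed-point theorem \discMLP{} has a unique fixed point $s^*$ and $s^{(k)} = \discMLP^k(s^{(0)}) \to s^*$, so \strengthMLP{} is fully defined and equals $s^*$ (consistent with Proposition \ref{prop_disc_limit_is_fixed_point}).

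The rate estimate then follows from the standard contraction bound $\|s^{(n)} - s^*\|_\infty \le \lambda^n\,\|s^{(0)} - s^*\|_\infty$. Since all strengths lie in $[0,1]$, we have $\|s^{(0)} - s^*\|_\infty \le 1$, whence $|\strengthMLP(a) - s_a^{(n)}| \le \lambda^n$ for every $a$. Requiring $\lambda^n < \epsilon$ and taking logarithms (noting $\ln\lambda < 0$ flips the inequality) yields exactly $n > \frac{\log\epsilon}{\log W + \log P - \log 4}$. I expect the main obstacle to be establishing the contraction cleanly: the decisive choices are working in the maximum norm, so that the indegree bound $P$ enters merely as the length of a sum rather than through a more delicate norm estimate, together with the uniform $\tfrac14$ bound on $\varphi_l'$. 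Once these are combined the rest is a direct application of Banach's theorem, and the general modular-semantics machinery of \cite{potyka_modular_2019} can be used to package the same argument abstractly.
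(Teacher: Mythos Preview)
Your proof is correct and follows essentially the same route as the paper: the paper simply packages the argument by citing the general modular-semantics results of \cite{potyka_modular_2019} (Propositions 3.1 and 3.3 there), whose content is precisely the topological-order sweep for the acyclic case and the Banach contraction argument in the $\|\cdot\|_\infty$ norm (via the $\tfrac14$ Lipschitz bound on $\varphi_l$ combined with the indegree/weight bound) for the cyclic case. You unwrap that machinery explicitly, which is exactly what the cited propositions do, and you even note the connection yourself at the end.
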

\begin{proof}
See appendix.
\end{proof}
In the acyclic case in item 1, the strength values can basically be computed by a single forward pass over a topological ordering
of the arguments \cite{potyka_modular_2019}. It is interesting to note that this process is equivalent to the usual forward propagation process in feed-forward networks (because, in an MLP, every layerwise ordering from the input to the output layer corresponds to a topological ordering and vice versa). 
In this sense, MLPs can indeed be seen as special cases of QBAFs, where the QBAF
has an acyclic layered structure, the aggregation function is addition and the influence function is a neural network
activation function.

Item 2 explains more complicated convergence conditions for cyclic QBAFs and gives a guarantee for the convergence rate. 
Convergence can be guaranteed if the maximum number of parents $P$ of arguments and the maximum edge weight $W$ in the QBAF are not too large. 
For example, if all edge weights are strictly smaller than $W=1.3$ and every argument has at most $P=3$ parents, 
then the iterative procedure is guaranteed to converge and the interpretation is fully defined.
To understand the guarantees for the convergence rate, first note that $\log W + \log P - \log 4 = \log \frac{W \cdot P}{4} < \log(1) = 0$
by the assumption $W \cdot P < 4$. Hence, the denominator in the term $\frac{\log \epsilon}{\log W + \log P - \log 4}$ is always negative.
For $\epsilon > 1$, the fraction is negative and, in this case, the bound is trivially true because all strength values are between $0$ and $1$.
Indeed, we are usually interested in small values of $\epsilon$ close to $0$. In this case, both the numerator and denominator are negative.
In particular, $\log \epsilon \rightarrow -\infty$ as $\epsilon \rightarrow 0$. That is, the number of iterations $n$ needed until
the difference between $s_a^{(n)}$ and $\strengthMLP(a)$ is smaller than a desired accuracy $\epsilon$ grows with increasing accuracy as we 
would naturally expect.
Perhaps more surprising, the number of iterations decreases as $W$ and $P$ become larger. 
An intuitive explanation is that large weights and many parents will move the weights quicker such that
convergence occurs faster. Of course, large $W$ and $P$ can also cause divergence of the procedure, but this can only happen if 
$W \cdot P \geq 4$.

The conditions in Theorem \ref{theo_convergence_guarantees} are sufficient, but not necessary for convergence. However,
Figure \ref{fig:fig_QBAF_divergence} shows a QBAF that demonstrates that the guarantees cannot be improved significantly
without adding additional assumptions about the structure of the QBAF. 
\begin{figure}[tb]
	\centering
		\includegraphics[width=0.45\textwidth]{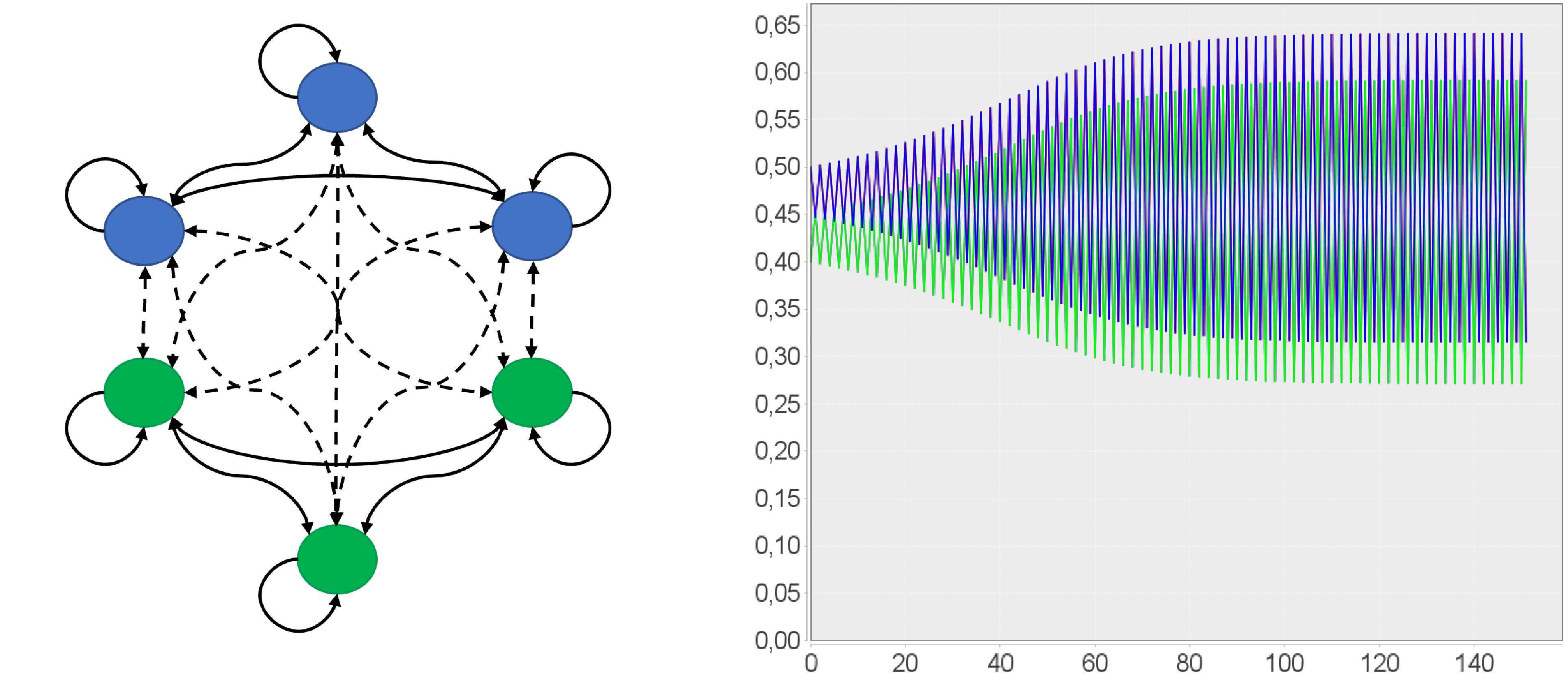}
	\caption{Left: Divergence example with base score $0.5$ (blue arguments) and $0.4$ (green arguments), edge weights $0.7$ (supports) and $-0.7$ (attacks).
	Right: evolution of strength values (y-axis) for blue and green arguments plotted against number of iterations (x-axis).}
	\label{fig:fig_QBAF_divergence}
\end{figure}
The QBAF in Figure \ref{fig:fig_QBAF_divergence} belongs to a family of QBAFs that have been presented 
in \cite{mossakowski2018modular} to construct divergence examples for modular semantics. 
Every blue argument attacks every blue argument (including itself) and supports every
green argument. Symmetrically, every green argument attacks every green argument and supports every blue argument.
The graph on the right in Figure \ref{fig:fig_QBAF_divergence} shows how the strength values evolve over time for green and
blue arguments. After approximately $100$ iterations, the strength values start cycling between two states.
Note that we have $W \cdot P =0.7 \cdot 6 = 4.2$. The example therefore shows that the condition $W \cdot P < 4$ in 
Theorem \ref{theo_convergence_guarantees} cannot be relaxed significantly. The example can be found in the 
Java library Attractor\footnote{\url{https://sourceforge.net/projects/attractorproject/}} \cite{Potyka2018_tut} in the folder examples/divergence. 
The reader
can check that the example still diverges for $W = 0.67$ ($W \cdot P = 4.02$). We present the example for $W=0.7$
mainly because the cycling can easily be illustrated visually for this case.

An overview of convergence guarantees for other modular semantics can be found in \cite{potyka_modular_2019}.
The convergence guarantees for MLP-based semantics are similarly strong as the ones for Euler-based semantics \cite{amgoud2017evaluation},
which are slightly stronger than the guarantees for DF-QuAD \cite{rago2016discontinuity} and the Quadratic Energy Model \cite{potyka2018Kr}.
While \cite{mossakowski2018modular} presented a modular semantics that guarantees convergence in general QBAFs, these guarantees
are bought at the expense of open-mindedness \cite{potyka_modular_2019}. That is, the strength values of arguments cannot be far from 
their original base scores. There is indeed a tradeoff between convergence guarantees and open-mindedness \cite{potyka_modular_2019}
and from this perspective, the MLP-based semantics is quite well behaved. 
Before we start discussing semantical guarantees of  MLP-based semantics, we take a detour in order to improve the convergence 
guarantees.

\subsection{Continuous MLP-Based Semantics}

As discussed in \cite{potyka2018Kr}, it is often possible to overcome convergence problems of modular semantics 
by continuizing their discrete update procedures. To do so, the update function of the modular semantics can be transformed 
into a system of differential equations.
\begin{definition}[Continuous MLP-based Semantics]
\label{def_cont_MLP_sem}
Let $Q $ be an edge-weighted QBAF over $[0,1]$. The interpretation of $Q$ under Continuous MLP-based Semantics
is defined by 
$$\strengthcMLP(a) =\begin{cases}
	\lim_{t \rightarrow \infty} \contMLP_a(t) & \textrm{if the limit exists} \\
	\bot & \textrm{otherwise} 
\end{cases}
$$
for all $a \in \arguments$, where $\contMLP: \mathbb{R}^+_0 \rightarrow [0,1]^n$ is the unique solution of the 
system of differential equations
\begin{align}
\label{eq_continuized_odes}
&\diff{f_i} = \frac{1}{1 + \frac{1-\baseScore(i)}{\baseScore(i)} \exp(-\sum_{(b,i) \in E} w(b,i) \cdot f_b)} - f_i, \\
&\ i=1,\dots,n, \notag
\end{align}
with initial conditions 
$f_i(0) = \baseScore(i)$ for $i = 1,\dots, n$. 
\end{definition}
Conceptually, the interpretation $\strengthcMLP$ is defined by two steps. First, we have to find the solution
$\contMLP$ of the system of differential equations \eqref{eq_continuized_odes}. Then we have to compute the limit of $\contMLP(t)$
as $t$ goes to infinity.
Intuitively, $\contMLP_a(t)$ can be understood as the strength of argument $a$ at time $t$. By the initial condition,
we have $\contMLP_a(0) = s^{(0)}_a = \baseScore(a)$, that is, the strength at time $0$ corresponds to the base score. 
As time progresses, the strength of $a$ continuously evolves.
In practice, the solution $\contMLP$ is approximated numerically and the two steps can be combined into one. The 
Java library Attractor \cite{Potyka2018_tut} contains an implementation of the Runge-Kutta method
RK4 for this purpose.

Intuitively, the i-th partial derivative $\diff{f_i}$ described in \eqref{eq_continuized_odes} describes the rate of change at a point in time and corresponds to the difference
between the desired function value \eqref{eq_discrete_update_formula} and the actual function value $f_i$. In particular, if $f_i$ 
is too large, the derivative will be negative so that the function value will decrease. Symmetrically, it will increase if $f_i$ is too
small. The following theorem explains that $\contMLP$ is indeed uniquely defined by the system of differential equations \eqref{eq_continuized_odes}
and explains some relationships between the discrete and continuous MLP-based semantics.
The proofs build up on general results about modular semantics developed in \cite{potyka_modular_2019}.
\begin{theorem}
\label{theorem_continuous_guarantees} 
For every QBAF $Q$, we have that
\begin{enumerate}
\item the system of differential equations in Definition \ref{def_cont_MLP_sem} has a unique solution
$\contMLP$.
\item If the limit $s^* = \lim_{t \rightarrow \infty} \contMLP(t)$ exists, then
$s^*$ is a fixed-point of $\discMLP$, that is, $\discMLP(s^*) = s^*$.
\item  If $\lim_{t \rightarrow \infty} \contMLP(t)$ converges and $Q$ satisfies any of the convergence conditions 
from Theorem \ref{theo_convergence_guarantees}, then $\strengthcMLP = \strengthMLP$.
\end{enumerate}
\end{theorem}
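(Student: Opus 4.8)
The plan is to handle the three items in turn, exploiting throughout that the right-hand side of \eqref{eq_continuized_odes} is $g_i(f) = \discMLP_i(f) - f_i$, a composition of the logistic $\varphi_l$ with an affine aggregation, minus the identity. For item 1, first I would observe that $g$ is globally Lipschitz on $[0,1]^n$: $\varphi_l$ is $C^\infty$ with $|\varphi_l'| \leq \tfrac{1}{4}$, the aggregation $\sum_{(b,i)\in E} w(b,i) f_b$ is affine, and subtracting $f_i$ preserves Lipschitz continuity (for boundary base scores $\baseScore(i)\in\{0,1\}$ the component degenerates to the constant $0$ or $1$, still Lipschitz). Picard--Lindel\"of then gives a unique local solution, which I would extend to all $t\geq 0$ by a short invariance argument: on the face $f_i = 0$ we have $\diff{f_i} = \discMLP_i(f) \geq 0$ and on $f_i = 1$ we have $\diff{f_i} = \discMLP_i(f) - 1 \leq 0$ since $\discMLP_i$ takes values in $[0,1]$, so $[0,1]^n$ is forward-invariant and the bounded solution exists globally.

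For item 2, suppose $s^* = \lim_{t\to\infty} \contMLP(t)$ exists. Since each component satisfies $\diff{f_i} = g_i(\contMLP)$, and $\discMLP$ is continuous with $\contMLP(t) \to s^*$, the derivative $g(\contMLP(t))$ converges to $\discMLP(s^*) - s^*$. I would then invoke the elementary fact that a differentiable function possessing a finite limit whose derivative also converges must have that derivative tend to $0$ (otherwise the function would run off to $\pm\infty$). Hence $\discMLP(s^*) - s^* = 0$, i.e.\ $s^*$ is a fixed-point of $\discMLP$.

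For item 3 the crux, and the step I expect to be the main obstacle, is uniqueness of the fixed-point of $\discMLP$ under the hypotheses of Theorem \ref{theo_convergence_guarantees}; everything else is then bookkeeping. Under $W\cdot P < 4$ I would estimate the Jacobian: since $\partial \discMLP_i/\partial s_b = \varphi_l'(\cdot)\,w(b,i)$ with $|\varphi_l'|\leq\tfrac14$, $|w(b,i)|\leq W$, and at most $P$ nonzero entries per row, every absolute row-sum is at most $\tfrac{WP}{4} < 1$, so $\discMLP$ is an $\ell_\infty$-contraction and Banach's fixed-point theorem yields a unique fixed-point. In the acyclic case I would argue uniqueness by topological induction instead: source arguments force $s^*_i = \baseScore(i)$ at any fixed-point, and each later argument's value is then determined by those of its parents. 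Either way the fixed-point is unique, so combining item 2 (the continuous limit $\strengthcMLP$ is a fixed-point) with Proposition \ref{prop_disc_limit_is_fixed_point} (the discrete limit $\strengthMLP$ is a fixed-point) forces $\strengthcMLP = \strengthMLP$. The technical heart is thus turning the convergence condition $WP/4 < 1$ into a genuine uniqueness statement.
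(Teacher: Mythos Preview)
Your proposal is correct and essentially self-contained, whereas the paper's own proof consists of a single sentence deferring all three items to Lemma~1 (that the MLP-based semantics is a basic modular semantics) together with Proposition~4.1 of \cite{potyka_modular_2019}. So the approaches differ: the paper invokes a general machinery for modular semantics developed elsewhere, while you unpack the concrete analytic content for this particular influence function. Your item~1 (Lipschitz bound via $|\varphi_l'|\leq\tfrac14$, Picard--Lindel\"of, forward-invariance of the cube) and item~2 (continuity of $g$ forces $\diff{f_i}$ to have a limit, and a convergent function whose derivative also converges must have derivative tending to zero) are exactly the ingredients that the cited proposition packages abstractly. For item~3, your reduction to uniqueness of the fixed-point of $\discMLP$---via an $\ell_\infty$-contraction with row-sum bound $\tfrac{WP}{4}<1$ in the cyclic case, and topological induction in the acyclic case---is again the right idea and makes explicit what the external reference provides. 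The upshot is that your argument is more transparent for a reader without access to \cite{potyka_modular_2019}, at the cost of being longer; the paper's version is terser but opaque unless one consults the cited work.
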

\begin{proof}
See appendix.
\end{proof}
Item 2 explains that whenever the continuous MLP-based semantics defines strength values, these strength values correspond 
to a fixed-point of the discrete update function. Note that the same is true for the discrete semantics as explained
in Proposition \ref{prop_disc_limit_is_fixed_point}. Unfortunately, it is not obvious that the fixed-points are equal
because $\discMLP$ may have several fixed-points.
However, item 3 explains that if the continuous MLP-based semantics defines strength values,
and any of the convergence conditions from Theorem \ref{theo_convergence_guarantees} are met,
then the fixed-points and thus the strength values are equal. 
Note that this applies, in particular, to acyclic graphs and graphs with small indegree or small weights. What makes this relationship
particularly interesting is that the continuous model can still converge to a meaningful limit when the discrete model does not.
Since this limit is guaranteed to be a fixed-point of the discrete model, it is, in a way, consistent with the discrete semantics.

Figure \ref{fig:fig_continuized} shows on the left how the strength values under continuous MLP-based semantics evolve
for the QBAF from Figure \ref{fig:fig_QBAF_divergence}. As opposed to the iterative update procedure, the continuous 
update process changes the strength values continuously and does indeed converge.  
This example demonstrates that the continuous model offers strictly stronger convergence 
guarantees than the discrete one. 
The intuitive reason is that every discrete modular semantics with smooth aggregation and influence function 
can be seen as a coarse approximation of a continuous counterpart \cite{potyka2018Kr}. From this perspective, the 
convergence problems for discrete semantics occur because the step-size of the approximation is too large.
It is actually an open question if there are QBAFs for which continuized semantics diverge as well. Until now, neither divergence examples
nor general convergence proofs have been found. To illustrate the general relationship between discrete and continuous MLP-based semantics further,
Figure \ref{fig:fig_continuized} shows, on the right, the evolution of strength values under discrete and continuous semantics
for the QBAF from Figure \ref{fig:fig_QBAF}. 
\begin{figure}[tb]
	\centering
		\includegraphics[width=0.45\textwidth]{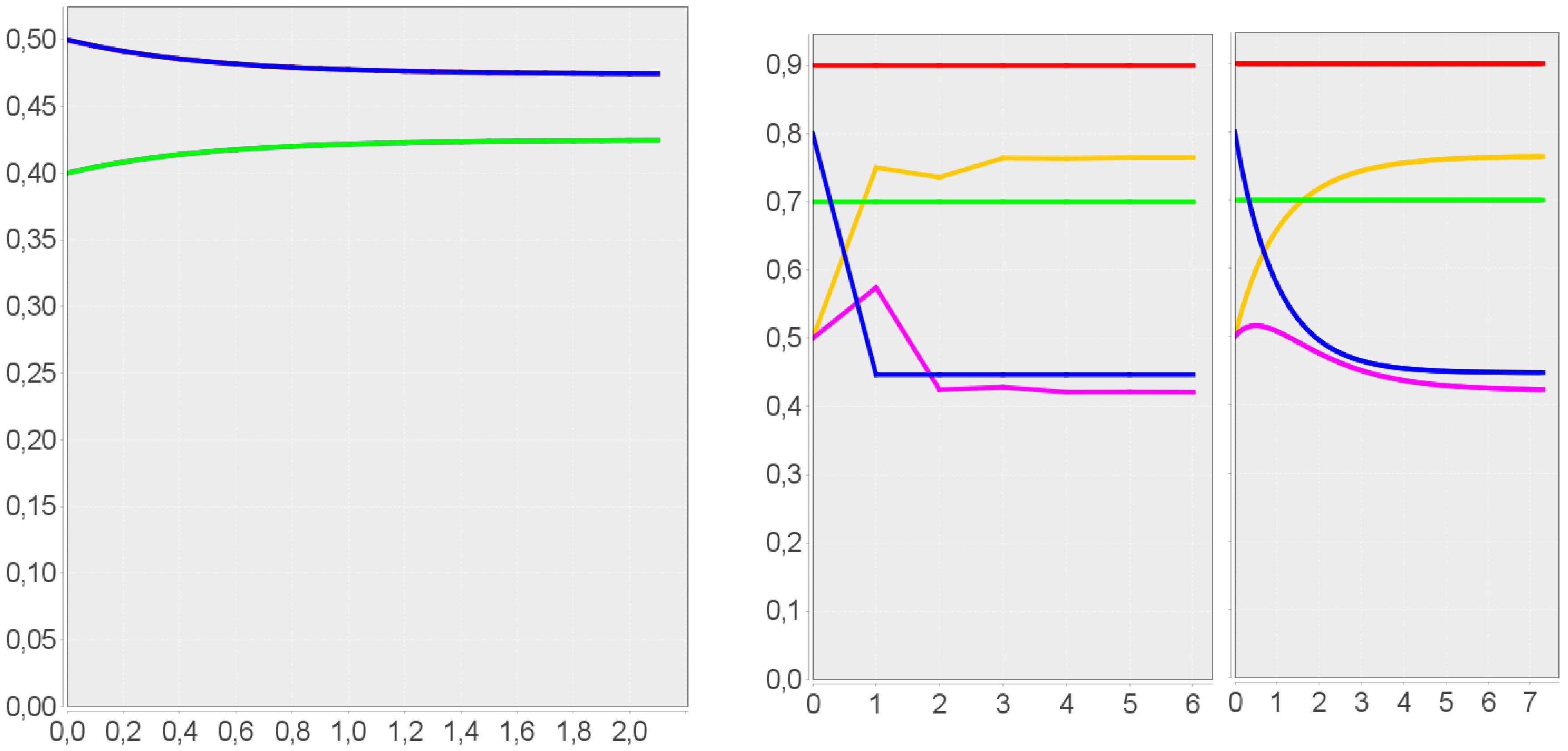}
	\caption{Evolution of strength values (y-axis) for QBAF from 
Figure \ref{fig:fig_QBAF_divergence} under continuous MLP-based semantics (left) and comparison of evolution of strength values
for QBAF from Figure \ref{fig:fig_QBAF} with $s=1$ under discrete and continuous  MLP-based semantics (right)}
	\label{fig:fig_continuized}
\end{figure}

\subsection{Semantical Guarantees}

We will now look at semantical guarantees for MLP-based semantics.
We know from Proposition \ref{prop_disc_limit_is_fixed_point} and Theorem  \ref{theorem_continuous_guarantees} that the strength values
under both semantics correspond to fixed-points of $\discMLP$ if they are defined. Therefore, we can study the properties of
both semantics simultaneously by studying properties that hold in a fixed-point of $\discMLP$.
In \cite{amgoud2017evaluation}, $12$ desirable properties have been presented that should be satisfied by quantitative argumentation
semantics. We consider two additional properties from \cite{potyka2018Kr,Potyka19om} that have been motivated by shortcomings
of existing semantics.
Since the properties have been phrased for QBAFs without edge-weights, we assume that the weights of all supports are $1$
and the weights of all attacks are $-1$.
To phrase the  properties, we let $\attacker^+$ and $\supporter^+$ denote the subsets of arguments 
in $\attacker$ and $\supporter$ that the fixed-point assigns a non-zero strength to.
The last property \emph{Almost Open-Mindedness} is a slightly weaker form of \emph{Open-Mindedness} from \cite{Potyka19om}. The only
difference to the original definition is that it excludes the base scores $0$ and $1$. 
\begin{theorem}
\label{theorem_properties_mlp_semantics}
Consider edge-weighted QBAFs $Q = (\arguments, E, \baseScore, w)$ and $Q' = (\arguments', E', \baseScore', w')$ with $w(e), w'(e') \in \{-1,1\}$ for all $e \in E, e' \in E'$ and corresponding
interpretations $\sigma$ and $\sigma'$ under discrete or continuous MLP-based semantics. Then the following properties are satisfied:
\begin{description}
	\item[Anonymity:] If $Q$ and $Q'$ are ismomorphic, then $\sigma = \sigma'$. 
	\item[Independence:] If $\arguments \cap \arguments' = \emptyset$, 
	  then for $Q'' = (\arguments \cup \arguments',  E \cup E', \baseScore \cup \baseScore', w \cup w')$,
		$\sigma''$ is fully defined, 
		$\sigma''(a) = \sigma(a)$ for $a \in \arguments$ and $\sigma''(a) = \sigma'(a)$ for $a \in \arguments'$.
 \item[Directionality:] If $\arguments = \arguments'$ and $E = E' \cup \{(a, b)\}$,
  then for all $c \in \arguments$ such that there is no directed path from $b$ to $c$, we have $\sigma(c) = \sigma'(c)$.
 \item[Equivalence:] If there are $a, b \in \arguments$ such that $\baseScore(a) = \baseScore(b)$ and there are bijections $h: \attacker(a) \rightarrow \attacker(b)$, $h': \supporter(a) \rightarrow \supporter(b)$
such that $\sigma(x) = \sigma(h(x))$ and $\sigma(y) = \sigma(h'(y))$ for all $x \in \attacker(a), y \in \supporter(a)$, then $\sigma(a) = \sigma(b)$.
 \item[Stability:] If there is an $a \in \arguments$ such that 
  $\attacker(a) = \supporter(a) = \emptyset$, then $\sigma(a) = \baseScore(a)$.
 \item[Neutrality:] If there are $a, b \in \arguments$ such that $\baseScore(a) = \baseScore(b)$, $\attacker(a) \subseteq \attacker(b)$, $\supporter(a) \subseteq \supporter(b)$,
$\attacker(a) \cup \supporter(a) = \attacker(b) \cup \supporter(b) \cup \{d\}$ and $\sigma(d) = 0$, then $\sigma(a) = \sigma(b)$.  
 \item[Monotony:] If there are $a, b \in \arguments$ such that $0 <\baseScore(a) = \baseScore(b) < 1$, $\attacker(a) \subseteq \attacker(b)$, $\supporter(a) \supseteq \supporter(b)$,
then
\begin{enumerate}
	\item $\sigma(a) \geq \sigma(b)$. \hfill (Monotony)
	\item if furthermore ($\sigma(a) > 0$ or $\sigma(b)<1$) and ($\attacker(a)^+ \subset \attacker(b)^+$ or $\supporter(a)^+ \supset \supporter(b)^+$), then $\sigma(a) >  \sigma(b)$. \hfill (Strict Monotony)
\end{enumerate} 
 \item[Reinforcement:] If there are $a, b \in \arguments$ such that $0 < \baseScore(a) = \baseScore(b) < 1$, $\attacker(a) \setminus \{x\} = \attacker(b)  \setminus \{y\}$, 
$\supporter(a)  \setminus \{x'\}  = \supporter(b)  \setminus \{y'\} $, $\sigma(x) \leq \sigma(y)$ and $\sigma(x') \geq \sigma(y')$, 
then
\begin{enumerate}
	\item $\sigma(a) \geq \sigma(b)$. \hfill (Reinforcement)
	\item if ($\sigma(a) > 0$ or $\sigma(b)<1$) and ($\sigma(x) < \sigma(y)$ or $\sigma(x') > \sigma(y')$), 
	then $\sigma(a) > \sigma(b)$. \hfill (Strict Reinforcement)
\end{enumerate} 
\item[Resilience:] If $a \in \arguments$ is such that $0 < \baseScore(a) < 1$, then $0 < \sigma(a) < 1$.
 \item[Franklin:] If there are 
  $a, b \in \arguments$ such that $\baseScore(a) = \baseScore(b)$, $\attacker(a) = \attacker(b) \cup \{x\}$, $\supporter(a) = \supporter(b)  \cup \{y\}$
and $\sigma(x) = \sigma(y)$, then $\sigma(a) = \sigma(b)$.
\item[Weakening: ] Assume that there is an $a \in \arguments$ with $\baseScore(a) > 0$. 
 Assume further that $g: \supporter(a) \rightarrow \attacker(a)$ is an injective function such that $\sigma(x) \leq \sigma(g(x))$
for all $x \in \supporter(a)$ and ($\attacker(a)^+ \setminus g(\supporter(a)) \neq \emptyset$ or there is an $x \in \supporter(a)$ such that $\sigma(x) < \sigma(g(x))$).
Then $\sigma(a) < \baseScore(a)$.
 \item[Strengthening:] Assume that there is an $a \in \arguments$ with $\baseScore(a) <1$. 
 Assume further that $b: \attacker(a) \rightarrow \supporter(a)$ is an injective function such that $\sigma(x) \leq \sigma(b(x))$
for all $x \in \attacker(a)$ and ($\supporter(a)^+ \setminus b(\attacker(a)) \neq \emptyset$ or there is an $x \in \attacker(a)$ such that $\sigma(x) < \sigma(b(x))$).
Then $\sigma(a) > \baseScore(a)$.
 \item[Duality:] Assume that there are $a, b \in \arguments$ such that 
  $\baseScore(a) = 0.5 + \epsilon$, $\baseScore(b) = 0.5 - \epsilon$ for some $\epsilon \in [0,0.5]$.
  If there are bijections $h: \attacker(a) \rightarrow \supporter(b)$, $h': \supporter(a) \rightarrow \attacker(b)$
such that $\sigma(x) = \sigma(f(x))$ and $\sigma(y) = \sigma(g(y))$ for all 
  $x \in \attacker(a), y \in \supporter(a)$, then $\sigma(a) - \baseScore(a) = \baseScore(b) - \sigma(b)$.
 \item[Almost Open-Mindedness:] For all $k \in \mathbb{N}$ and $p \in \{-1,1\}$, let $Q_k^p = (\arguments_k^p, E_k^p, \baseScore_k^p, w_k^p)$
   be constructed from $Q$ by letting $\arguments_k^p = \arguments \cup \{A_1, \dots, A_k\}$, 
		$E_k^p = E \cup \{(A_1, a), \dots, (A_k, a)\}$, $\baseScore_k^p(b) = \baseScore(b)$ for all $b \in \arguments$ and
		 $\baseScore_k^p(A_i) = p$ for $1 \leq i \leq k$.
		 Then for every $a \in \arguments$ with $0 < \baseScore(a) < 1$ and for every $\epsilon > 0$, 
    there is an $N \in \mathbb{N}$ such that the interpretation $\sigma^{k,p}$ corresponding to $Q_k^p$ satisfies
		\begin{enumerate}
			\item $\sigma^{k,p}(a) < \epsilon$ whenever $p=-1$ and $k > N$ and
			\item $\sigma^{k,p}(a) > 1 - \epsilon$ whenever $p=1$ and $k > N$.
		\end{enumerate}
\end{description}
\end{theorem}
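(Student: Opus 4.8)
The plan is to exploit the reduction, already available from Proposition \ref{prop_disc_limit_is_fixed_point} and Theorem \ref{theorem_continuous_guarantees}, that under either semantics the strength of each argument is its value at a fixed-point of $\discMLP$. Once the edge weights are fixed to $\{-1,+1\}$, that fixed-point identity reads, for every argument $a$,
\begin{equation*}
\sigma(a) = \varphi_l\Big(\lambda(a) + \sum_{s \in \supporter(a)} \sigma(s) - \sum_{t \in \attacker(a)} \sigma(t)\Big), \qquad \lambda(a) := \ln\tfrac{\baseScore(a)}{1-\baseScore(a)}.
\end{equation*}
So the whole theorem reduces to analysing this single identity, and the argument rests on three elementary facts about the logistic function: it is continuous and strictly increasing, it satisfies $\varphi_l(-z) = 1 - \varphi_l(z)$, and it sends finite arguments into the open interval $(0,1)$ (with the limiting conventions sending $\pm\infty$ to $0$ and $1$). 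I would first record these facts together with the fixed-point identity as a lemma, and then discharge the properties group by group.

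Most of the properties are purely local comparisons within a single interpretation and follow by inspecting the logit $\lambda(a) + \sum_{\supporter(a)}\sigma - \sum_{\attacker(a)}\sigma$. Stability, Equivalence, Neutrality and Franklin all reduce to showing that two logits are equal, because the differing contributions either vanish (an empty relation, a neutral argument of strength $0$, an attacker/supporter pair of equal strength) or are matched termwise by the given bijections; strict monotonicity of $\varphi_l$ then forces equal strengths. Monotony and Reinforcement compare two logits that differ by a nonnegative amount (fewer attackers / more supporters, or a weaker attacker / stronger supporter), giving $\sigma(a) \ge \sigma(b)$; the strict versions come from a strictly positive logit gap, where the side conditions together with the finiteness of $\lambda$ rule out the degenerate cases. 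Resilience is immediate, since for $0 < \baseScore(a) < 1$ the term $\lambda(a)$ is finite and the aggregate is a finite sum of bounded terms, so the logit is finite and $\sigma(a)\in(0,1)$. Weakening and Strengthening use the injection to pair each supporter with a dominating attacker (or vice versa), forcing the aggregate strictly negative (resp. positive) so that $\sigma(a)$ falls strictly below (resp. above) $\baseScore(a)$. Duality is the one local property requiring the reflection identity: the hypotheses make $\lambda(b) = -\lambda(a)$ (the base scores are reflections about $0.5$) and swap the support and attack sums, so the logit of $b$ is the exact negative of that of $a$; hence $\sigma(b) = 1-\sigma(a)$, which rearranges to the claimed relation.

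Three properties need more than the pointwise identity, and these are where I expect the real work. Anonymity, Independence and Directionality compare interpretations of \emph{different} QBAFs, and since $\discMLP$ may have several fixed-points one cannot simply restrict the fixed-point equation to a subset of arguments; instead one must argue that the defining trajectory (the discrete iteration, and the flow of \eqref{eq_continuized_odes}) decomposes. For Anonymity this is just equivariance of the update under relabeling. For Independence, the absence of cross-edges makes each component's update depend only on that component, so the trajectory on $\arguments$ (resp. $\arguments'$) coincides with the trajectory in $Q$ (resp. $Q'$) from the identical base-score initialization, and convergence transfers. For Directionality the key observation is that $C = \{c : \text{no directed path } b \leadsto c\}$ is closed under taking parents (a parent of $c\in C$ outside $C$ would create a path $b \leadsto c$) and that the new edge $(a,b)$ points into $b \notin C$; hence the dynamics restricted to $C$ form an autonomous subsystem that is identical in $Q$ and $Q'$ with identical initial conditions, forcing $\sigma(c)=\sigma'(c)$ there. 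Finally, Almost Open-Mindedness is an asymptotic estimate: the $k$ added arguments have no parents, so by Stability their strength is pinned at the boundary, and they contribute $\pm k$ to the logit of $a$; since $\lambda(a)$ is finite and the remaining aggregate stays bounded in absolute value by $|\attacker(a)|+|\supporter(a)|$ however the other strengths shift, the logit of $a$ is driven to $-\infty$ (for $p=-1$) or $+\infty$ (for $p=1$), and choosing $N$ large enough pushes $\varphi_l$ below $\epsilon$ or above $1-\epsilon$. The main obstacle throughout is the careful treatment of the boundary base scores $0$ and $1$, where $\lambda$ becomes infinite and the limiting conventions take over; this is precisely why Open-Mindedness must be weakened to its ``almost'' form and why the strict inequalities in Monotony, Reinforcement, Weakening and Strengthening carry their side conditions.
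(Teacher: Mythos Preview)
Your proposal is correct and follows essentially the same approach as the paper: both reduce everything to the fixed-point identity and analyse the resulting logit, with the local properties (Stability through Strengthening) handled by direct comparison of the sums and the global ones (Anonymity, Independence, Directionality, Almost Open-Mindedness) by arguing that the defining dynamics decompose appropriately. Your treatment of Directionality via the parent-closed set $C$ forming an autonomous subsystem is in fact more explicit than the paper's rather informal subgraph argument, and your Duality proof via the reflection identity $\varphi_l(-z)=1-\varphi_l(z)$ is just a cleaner packaging of the paper's direct algebraic verification that $\sigma(a)+\sigma(b)=1$.
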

\begin{proof}
See appendix.
\end{proof}
\begin{figure}[tb]
	\centering
		\includegraphics[width=0.45\textwidth]{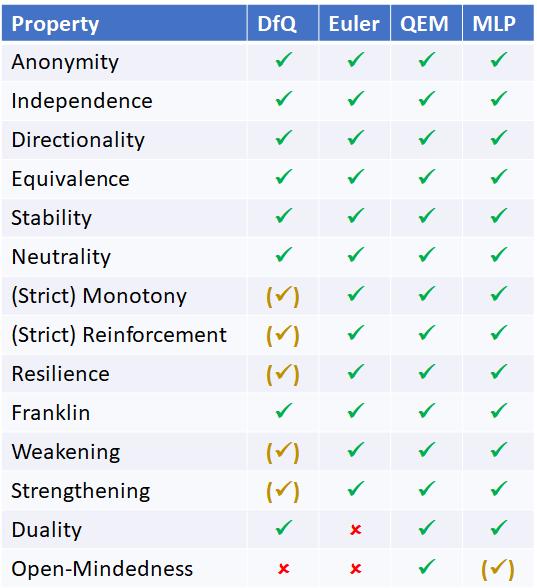}
	\caption{Semantical properties that are satisfied ($\checkmark$), satisfied when excluding base scores $0$ and $1$ ($(\checkmark)$) or not satisfied even when 
	excluding base scores $0$ or $1$ ($\textit{x}$) by 
	Df-QuAD (DfQ), Euler-based Semantics (Euler), Quadratic Energy Model 
 (QEM)	and MLP-based Semantics (MLP).}
	\label{fig:fig_properties}
\end{figure}
The first 12 properties have been introduced in \cite{amgoud2017evaluation}.
\emph{Anonymity} is a fairness condition and intuitively states that the  strength values should not depend on the identity of the argument.
\emph{Independence} says that disconnected subgraphs should not affect each other. 
\emph{Directionality} demands that the strength of an argument depends only on its predecessors in the graph.
\emph{Equivalence} says that arguments with equal status should be evaluated equally.
\emph{Stability} states that the final strength is just the initial weight if an argument does not have any parents.
\emph{Neutrality} demands that arguments with strength $0$ do not affect other arguments.
\emph{Monotony} makes a quantitative statement: adding attackers or removing supporters can only weaken an argument.
\emph{Reinforcement} makes a similiar qualitative statement: strengthening attackers or weakening supporters can only weaken an argument.
\emph{Resilience} demands that the extreme values $0$ and $1$ can never be taken unless the base score was already an extreme value.
\emph{Franklin} says that an attacker and a supporter with equal strength cancel their effects. 
\emph{Weakening} states that an argument's strength must be smaller than its base score when the attackers dominate
the supporters. Symmetrically, Strengthening says that its strength must be larger when the supporters dominate.
\emph{Duality} from \cite{potyka2018Kr} demands that attacks and supports are treated equally. Roughly speaking,
the positive effect of a support should correspond to the negative effect of an attack.
\emph{Open-mindedness} \cite{Potyka19om} says that the strength of an argument can become arbitrarily close to $0$ or $1$
independent of its base score if there is only a sufficient number of strong attackers or supporters. 
As we explain in the appendix, the MLP-based semantics satisfies this property in almost all cases except if 
base scores are set to $0$ and $1$. In this case, they can actually never change under MLP-based semantics.

Figure \ref{fig:fig_properties} gives an overview about which properties are satisfied by different semantics.
Df-QuAD \cite{rago2016discontinuity} had been introduced first and already fixed a problem of the
QuAD model proposed in \cite{baroni2015automatic}. However, it does not completely satisfy several properties because of the
way how it aggregates strength values. Roughly speaking, if an argument has both an attacker and a supporter 
with strength $1$, its strength will necessarily be the base score no matter what other attackers and
supporters there are. The Euler-based semantics \cite{amgoud2017evaluation} had been introduced to overcome these problems. 
However, it introduced some other problems that are reflected by the fact that it satisfies neither duality nor open-mindedness. 
In particular, it treats attacks and supports in a rather random asymmetrical fashion. 
The quadratic energy model \cite{potyka2018Kr} had been introduced to fix these issues. Therefore, it is not surprising that it satisfies
all properties.
Perhaps more surprising is that the MLP-based semantics satisfies all properties almost perfectly even though
it has not been designed for this purpose. Its mechanics are actually very similar to the Euler-based semantics,
but it fixes the Euler-based semantics' asymmetry between attacks and supports.
 As we explain in the appendix, the MLP-based semantics violates \emph{Open-Mindedness} only when the base
scores are set to the extreme values $0$ or $1$. It is a little bit odd that these values cannot change since they 
basically render such arguments redundant (their effect could directly be encoded in the base score of their children).
However, it is not a big drawback since there is usually not a big practical difference between the base 
scores $0.99$ and $1$ or $0.01$ and $0$, respectively.

\section{Conclusions and Related Work}

We viewed MLPs as QBAFs to analyze their mechanics from an argumentation perspective. As it turns out, the MLP-based semantics 
offers comparatively good convergence guarantees in cyclic QBAFs and satisfies the common-sense properties from the literature almost perfectly. 
Recent combinations of machine learning methods and QBAFs often use variants of Df-QuAD and Euler-based semantics \cite{cocarascu2019extracting,kotonya2019gradual}. It may be interesting to evaluate these approaches with MLP-based semantics. In particular, the generated QBAFs are acyclic in many applications, 
so that the resulting model under MLP-based semantics is a sparse MLP. For applications, this is interesting because it allows to retrain the 
weights by the usual backpropagation procedure in an end-to-end fashion (base score $\beta$ translates to bias $\ln(\beta/(1- \beta))$ and bias $\theta$ translates to base score $\varphi_l(\theta)$). 
From a machine learning perspective, this is interesting because there has been growing interest in learning sparse neural networks 
\cite{louizos2018learning,frankle2018lottery,mocanu2018scalable}, not only to improve their interpretability,
but also to tame their learning complexity.  We may create sparse MLPs
by building an acyclic sparse QBAFs from data like in \cite{cocarascu2019extracting,kotonya2019gradual}
and translating it into an MLP.

It seems, more generally, interesting to view an acyclic QBAF with sum for aggregation as an MLP with a particular activation 
function to learn base scores and edge weights of QBAFs from data.   
If the influence function is differentiable, we can indeed just use the usual backpropagation procedure that is implemented in libraries like PyTorch and Tensorflow.

Let us note that there has been previous work on using neural networks for argumentation. 
For example, the authors in \cite{garcez2005value} showed how \emph{value-based argumentation frameworks} \cite{bench2003persuasion}
can be encoded as MLPs.
In these frameworks, every argument is associated with a \emph{value} and there is a set of audiences with
different preferences over the values. Arguments can then be \emph{subjectively accepted} by one or
\emph{objectively accepted} by all audiences. The authors in \cite{garcez2005value} showed that an MLP with a single hidden layer
and a \emph{semi-linear activation function} can compute the \emph{prevailing arguments} in these frameworks.
More recently, there have also been attempts to use neural networks to approximately compute labellings of classical argumentation
frameworks \cite{riveret2015neuro,kuhlmann2019using}.

Argumentation technology has also been considered as a more immediate tool for interpretable machine learning.
\cite{thimm2017towards} proposed to solve classification problems by means
of \emph{structured argumentation}. As opposed to the abstract argumentation setting that we considered here, 
structured argumentation explicitly takes the premises and conclusions of arguments into account. 
\cite{thimm2017towards} suggest learning
structured arguments by rule mining algorithms. The rules 
can then be fed into a structured argumentation solver that can then derive a label for given inputs
and explain the outcome. 
While this is a very interesting idea for explainable classification, a current challenge is guiding the underlying rule mining algorithm
such that it finds meaningful arguments. 

\subsection{Acknowledgements:} This research was supported by the DFG through the projects EVOWIPE (STA572/15-1)
and COFFEE (STA572/15-2).

\section*{Appendix}

\newtheorem*{prop_disc_limit_is_fixed_point}{Proposition \ref{prop_disc_limit_is_fixed_point}}
\begin{prop_disc_limit_is_fixed_point}
If $\strengthMLP$ is fully defined, then $s^* = \lim_{k \rightarrow \infty} s^{(k)}$ is a fixed-point of 
$\discMLP$, i.e., $\discMLP(s^*) = s^*$.
\end{prop_disc_limit_is_fixed_point}
\begin{proof}
Note first that the update function $\discMLP$ is a continuous function on $[0,1]$ by our definition. 
Hence, we have  $\discMLP(s^*) = \discMLP(\lim_{k \rightarrow \infty} s^{(k)})
=\discMLP(\lim_{k \rightarrow \infty} \discMLP^k(s^{(0)})) = \lim_{k \rightarrow \infty} \discMLP^{k+1}(s^{(0)})
= s^*$, where the third equality follows from continuity of $\discMLP$.
\end{proof}

Theorems 1 and 2 follow from observing that the MLP-based semantics belongs to the class of \emph{Basic Modular Semantics}
that were introduced in \cite{potyka_modular_2019}.
We explain this in the following Lemma.
\begin{lemma}
The MLP-based semantics is a basic modular semantics.
\end{lemma}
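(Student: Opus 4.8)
The plan is to unfold the definition of \emph{basic modular semantics} from \cite{potyka_modular_2019} and to verify each of its defining conditions for the concrete aggregation and influence functions underlying the MLP-based semantics. Recall that a modular semantics is specified by an aggregation function, which combines the strengths of the attackers and supporters of an argument, together with an influence function, which maps the resulting aggregate and the base score to a new strength. For the MLP-based semantics these are, respectively, the weighted sum $\alpha_a = \sum_{(b,a) \in E} w(b,a) \cdot s_b$ and the influence function $\iota(\baseScore(a), \alpha_a) = \varphi_l\bigl(\ln(\frac{\baseScore(a)}{1 - \baseScore(a)}) + \alpha_a\bigr)$. The bulk of the proof is then a mechanical check that this pair meets the axioms of a basic modular semantics.

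First I would dispatch the structural conditions. The aggregation function is additive in the contributions of the individual parents, with attackers (negative weights) contributing negatively and supporters (positive weights) contributing positively, which is exactly the bipolar additive form a basic semantics expects. The influence function maps $[0,1] \times \mathbb{R}$ into $[0,1]$ because $\varphi_l$ always returns values in $(0,1)$, and I would verify the key neutrality identity $\iota(\baseScore(a), 0) = \baseScore(a)$: substituting a zero aggregate gives $\varphi_l(\ln(\frac{\baseScore(a)}{1 - \baseScore(a)})) = \baseScore(a)$ by a one-line simplification. Monotonicity follows because $\varphi_l$ is strictly increasing, so $\iota$ is strictly increasing in its aggregate argument, meaning that more or stronger support raises the strength while attack lowers it, as required.

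The analytically substantive step is to exhibit the regularity on which the convergence statements of Theorems \ref{theo_convergence_guarantees} and \ref{theorem_continuous_guarantees} ultimately rest, namely that the update map is Lipschitz with a constant controlled by the edge weights and the indegree. Here I would use the fact that the logistic function has maximal slope $\frac{1}{4}$, attained at the origin, so that $|\frac{\partial \iota}{\partial \alpha}| \leq \frac{1}{4}$ uniformly. Combined with the additive aggregation, this yields precisely the $W \cdot P$-type sensitivity bound demanded by the definition of basic modular semantics, and it explains the constant $4$ in the convergence condition. Since $\varphi_l$ and the aggregation are everywhere continuous and differentiable, the remaining continuity and boundedness requirements are routine.

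The main obstacle will be the boundary cases $\baseScore(a) \in \{0,1\}$, where $\ln(\frac{\baseScore(a)}{1 - \baseScore(a)})$ is undefined. Here I would appeal to the limit completion already fixed in the definition of the semantics ($\ln(0) := -\infty$, $\varphi_l(-\infty) := 0$, and symmetrically at $1$) and confirm that the completed influence function stays continuous on the closed domain and still returns values in $[0,1]$. The delicate point is checking that every axiom of a basic modular semantics that is phrased over the full domain, in particular the continuity and monotonicity conditions, survives this completion. Once that is settled, the lemma follows, and Theorems \ref{theo_convergence_guarantees} and \ref{theorem_continuous_guarantees} are obtained by invoking the corresponding general results of \cite{potyka_modular_2019}.
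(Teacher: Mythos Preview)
Your proposal is correct and follows essentially the same approach as the paper: verify the defining conditions of a basic modular semantics from \cite{potyka_modular_2019} for the weighted-sum aggregation and the logistic influence, with the two substantive checks being the neutrality identity $\iota(\baseScore(a),0)=\baseScore(a)$ and the Lipschitz constant $\tfrac{1}{4}$ for $\varphi_l$. The paper's proof is terser (it offloads the aggregation-function axioms to the analogue of Proposition~2.5 in \cite{potyka_modular_2019} and does not separately discuss monotonicity or the boundary completion), but the strategy is identical.
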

\begin{proof}
To prove the claim, we have to check that the aggregation and influence function satisfy the properties of basic modular semantics
stated in Definition 2.4 in \cite{potyka_modular_2019}. The aggregation function is just a weighted variant of the sum aggregation
function considered in \cite{potyka_modular_2019} and the proofs are completely analogously to the corresponding proofs
for Proposition 2.5 in \cite{potyka_modular_2019} (see \url{https://arxiv.org/pdf/1809.07133.pdf} for the proofs).

The influence function must satisfy two properties. First, it must return the base score of an argument whenever the aggregate is $0$. 
To see that this is the case, note that 
\begin{align*}
\varphi_l\big(\ln(\frac{\baseScore(a)}{1- \baseScore(a)}) + 0 \big)
&= \frac{1}{1 + \exp(-\ln(\frac{\baseScore(a)}{1- \baseScore(a)}))}\\
&= \frac{1}{1 + \frac{1- \baseScore(a)}{\baseScore(a)}} 
= \baseScore(a).
\end{align*}
The second property that the influence function must satisfy is Lipschitz-continuity \cite{rudin1976}. To see that it does, first note that the influence function is a
function of the aggregate (the base score is a constant). 
We make use of the fact that a function with derivative bounded by $B$ is Lipschitz-continuous with Lipschitz constant $B$ (this can be seen from the mean value theorem \cite{rudin1976}). It is well known that the derivate of the logistic function is
$\varphi'_l(z) = \varphi_l(z) \cdot \varphi_l(-z)$. It takes its maximum at $0$ and is therefore bounded by $\varphi_l(0) \cdot \varphi_l(0) = 0.5 \cdot 0.5 = \frac{1}{4}$. The mean value
theorem therefore implies that it is Lipschitz-continuous with Lipschitz constant $\frac{1}{4}$.   
\end{proof}

\newtheorem*{theo_convergence_guarantees}{Theorem \ref{theo_convergence_guarantees}}
\begin{theo_convergence_guarantees}
Let $Q $ be an edge-weighted QBAF over $[0,1]$.
\begin{enumerate}
	\item If $Q $ is acyclic, then $\strengthMLP$ is fully defined and, for all $a \in \arguments$, $\strengthMLP(a)$ can
		be computed in linear time.
	\item If all arguments in $Q$ have at most $P$ parents, the weight of all edges is bounded from above by $W$
	 and we have $W \cdot P < 4$, then $\strengthMLP$ is fully defined. 
	Furthermore, $|\strengthMLP(a) - s_a^{(n)}| < \epsilon$ whenever $n > \frac{\log \epsilon}{\log W + \log P - \log 4}$.
\end{enumerate}
\end{theo_convergence_guarantees}
\begin{proof}
Item 1 follows from Lemma 1 and Proposition 3.1 in \cite{potyka_modular_2019}.

For Item 2, note that $\lambda^\alpha_i = \sum_{(b,i) \in E} w(b,i)$ is a Lipschitz constant for the aggregation function (weighted sum) 
at the $i$-th component. In particular, $\lambda_i \leq W \cdot P$. As explained in the proof of Lemma 1, $\lambda^\iota_i = \frac{1}{4}$
is a Lipschitz constant for the influence function, so that $\lambda^\alpha_i \cdot \lambda^\iota_i \leq \frac{W \cdot P}{4}$. Every component
of the update function is therefore Lipschitz-continuous with Lipschitz constant $\frac{W \cdot P}{4}$.
Item 2 follows from this with Proposition 3.3 in \cite{potyka_modular_2019}.
\end{proof}

\newtheorem*{theorem_continuous_guarantees}{Theorem \ref{theorem_continuous_guarantees}}
\begin{theorem_continuous_guarantees}
For every BAG $Q$, we have that
\begin{enumerate}
\item the system of differential equations in Definition 6 has a unique solution
$\contMLP$.
\item If the limit $s^* = \lim_{t \rightarrow \infty} \contMLP(t)$ exists, then
$s^*$ is a fixed-point of $\discMLP$, that is, $\discMLP(s^*) = s^*$.
\item  If $\lim_{t \rightarrow \infty} \contMLP(t)$ converges and $Q$ satisfies any of the convergence conditions 
from Theorem \ref{theo_convergence_guarantees}, then $\strengthcMLP = \strengthMLP$.
\end{enumerate}
\end{theorem_continuous_guarantees}
\begin{proof}
All claims follow from Lemma 1 and Proposition 4.1 in \cite{potyka_modular_2019}.
\end{proof}

To phrase the semantical properties, we let $\attacker^+$ and $\supporter^+$ denote the subsets of arguments 
in $\attacker$ and $\supporter$ that the (discrete or continuous) MLP-based semantics assigns a non-zero strength to.
The last property \emph{Almost Open-Mindedness} is a relaxation of \emph{Open-Mindedness} \cite{Potyka19om}. The only
difference to the original definition is that it excludes the base scores $0$ and $1$. 
\newtheorem*{theorem_properties_mlp_semantics}{Theorem \ref{theorem_properties_mlp_semantics}}
\begin{theorem_properties_mlp_semantics}
Let $Q = (\arguments, E, \baseScore, w)$ and $Q' = (\arguments', E', \baseScore', w')$ be edge-weighted QBAFs with $w(e), w'(e') \in \{-1,1\}$ for all $e \in E, e' \in E'$ and corresponding
interpretations $\sigma$ and $\sigma'$ under discrete or continuous MLP-based semantics. Then the following properties are satisfied:
\begin{description}
	\item[Anonymity:] If $Q$ and $Q'$ are ismomorphic, then $\sigma = \sigma'$. 
	\item[Independence:] If $\arguments \cap \arguments' = \emptyset$, 
	  then for $Q'' = (\arguments \cup \arguments',  E \cup E', \baseScore \cup \baseScore', w \cup w')$,
		$\sigma''$ is fully defined, 
		$\sigma''(a) = \sigma(a)$ for $a \in \arguments$ and $\sigma''(a) = \sigma'(a)$ for $a \in \arguments'$.
 \item[Directionality:] If $\arguments = \arguments'$ and $E = E' \cup \{(a, b)\}$,
  then for all $c \in \arguments$ such that there is no directed path from $b$ to $c$, we have $\sigma(c) = \sigma'(c)$.
 \item[Equivalence:] If there are $a, b \in \arguments$ such that $\baseScore(a) = \baseScore(b)$ and there are bijections $h: \attacker(a) \rightarrow \attacker(b)$, $h': \supporter(a) \rightarrow \supporter(b)$
such that $\sigma(x) = \sigma(h(x))$ and $\sigma(y) = \sigma(h'(y))$ for all $x \in \attacker(a), y \in \supporter(a)$, then $\sigma(a) = \sigma(b)$.
 \item[Stability:] If there is an $a \in \arguments$ such that 
  $\attacker(a) = \supporter(a) = \emptyset$, then $\sigma(a) = \baseScore(a)$.
 \item[Neutrality:] If there are $a, b \in \arguments$ such that $\baseScore(a) = \baseScore(b)$, $\attacker(a) \subseteq \attacker(b)$, $\supporter(a) \subseteq \supporter(b)$,
$\attacker(a) \cup \supporter(a) = \attacker(b) \cup \supporter(b) \cup \{d\}$ and $\sigma(d) = 0$, then $\sigma(a) = \sigma(b)$.  
 \item[Monotony:] If there are $a, b \in \arguments$ such that $0 <\baseScore(a) = \baseScore(b) < 1$, $\attacker(a) \subseteq \attacker(b)$, $\supporter(a) \supseteq \supporter(b)$,
then
\begin{enumerate}
	\item $\sigma(a) \geq \sigma(b)$. \hfill (Monotony)
	\item if furthermore ($\sigma(a) > 0$ or $\sigma(b)<1$) and ($\attacker(a)^+ \subset \attacker(b)^+$ or $\supporter(a)^+ \supset \supporter(b)^+$), then $\sigma(a) >  \sigma(b)$. \hfill (Strict Monotony)
\end{enumerate} 
 \item[Reinforcement:] If there are $a, b \in \arguments$ such that $0 < \baseScore(a) = \baseScore(b) < 1$, $\attacker(a) \setminus \{x\} = \attacker(b)  \setminus \{y\}$, 
$\supporter(a)  \setminus \{x'\}  = \supporter(b)  \setminus \{y'\} $, $\sigma(x) \leq \sigma(y)$ and $\sigma(x') \geq \sigma(y')$, 
then
\begin{enumerate}
	\item $\sigma(a) \geq \sigma(b)$. \hfill (Reinforcement)
	\item if ($\sigma(a) > 0$ or $\sigma(b)<1$) and ($\sigma(x) < \sigma(y)$ or $\sigma(x') > \sigma(y')$), 
	then $\sigma(a) > \sigma(b)$. \hfill (Strict Reinforcement)
\end{enumerate} 
\item[Resilience:] If $a \in \arguments$ is such that $0 < \baseScore(a) < 1$, then $0 < \sigma(a) < 1$.
 \item[Franklin:] If there are 
  $a, b \in \arguments$ such that $\baseScore(a) = \baseScore(b)$, $\attacker(a) = \attacker(b) \cup \{x\}$, $\supporter(a) = \supporter(b)  \cup \{y\}$
and $\sigma(x) = \sigma(y)$, then $\sigma(a) = \sigma(b)$.
\item[Weakening: ] Assume that there is an $a \in \arguments$ with $\baseScore(a) > 0$. 
 Assume further that $g: \supporter(a) \rightarrow \attacker(a)$ is an injective function such that $\sigma(x) \leq \sigma(g(x))$
for all $x \in \supporter(a)$ and ($\attacker(a)^+ \setminus g(\supporter(a)) \neq \emptyset$ or there is an $x \in \supporter(a)$ such that $\sigma(x) < \sigma(g(x))$).
Then $\sigma(a) < \baseScore(a)$.
 \item[Strengthening:] Assume that there is an $a \in \arguments$ with $\baseScore(a) <1$. 
 Assume further that $b: \attacker(a) \rightarrow \supporter(a)$ is an injective function such that $\sigma(x) \leq \sigma(b(x))$
for all $x \in \attacker(a)$ and ($\supporter(a)^+ \setminus b(\attacker(a)) \neq \emptyset$ or there is an $x \in \attacker(a)$ such that $\sigma(x) < \sigma(b(x))$).
Then $\sigma(a) > \baseScore(a)$.
 \item[Duality:] Assume that there are $a, b \in \arguments$ such that 
  $\baseScore(a) = 0.5 + \epsilon$, $\baseScore(b) = 0.5 - \epsilon$ for some $\epsilon \in [0,0.5]$.
  If there are bijections $h: \attacker(a) \rightarrow \supporter(b)$, $h': \supporter(a) \rightarrow \attacker(b)$
such that $\sigma(x) = \sigma(f(x))$ and $\sigma(y) = \sigma(g(y))$ for all 
  $x \in \attacker(a), y \in \supporter(a)$, then $\sigma(a) - \baseScore(a) = \baseScore(b) - \sigma(b)$.
 \item[Almost Open-Mindedness:] For all $k \in \mathbb{N}$ and $p \in \{-1,1\}$, let $Q_k^p = (\arguments_k^p, E_k^p, \baseScore_k^p, w_k^p)$
   be constructed from $Q$ by letting $\arguments_k^p = \arguments \cup \{A_1, \dots, A_k\}$, 
		$E_k^p = E \cup \{(A_1, a), \dots, (A_k, a)\}$, $\baseScore_k^p(b) = \baseScore(b)$ for all $b \in \arguments$ and
		 $\baseScore_k^p(A_i) = p$ for $1 \leq i \leq k$.
		 Then for every $a \in \arguments$ with $0 < \baseScore(a) < 1$ and for every $\epsilon > 0$, 
    there is an $N \in \mathbb{N}$ such that the interpretation $\sigma^{k,p}$ corresponding to $Q_k^p$ satisfies
		\begin{enumerate}
			\item $\sigma^{k,p}(a) < \epsilon$ whenever $p=-1$ and $k > N$ and
			\item $\sigma^{k,p}(a) > 1 - \epsilon$ whenever $p=1$ and $k > N$.
		\end{enumerate}
\end{description}
\end{theorem_properties_mlp_semantics}
\begin{proof}
Note first that since $\sigma$ corresponds to a fixed-point of $\discMLP$ and all edge weights are either $1$ or $-1$, 
we have, for all $a \in \arguments$, that $\sigma(a)$ equals
\begin{equation}
\label{eq_equilibrium}
\frac{1}{1 + \frac{1-\baseScore(a)}{\baseScore(a)}\exp(\sum\limits_{b \in \attacker(a)}  \sigma(b) - \sum\limits_{b \in \supporter(a)}  \sigma(b))}.
\end{equation}

Anonymity follows immediately from observing that the strength of an argument depends only on its base score and the strength of its attackers
and supporters and does not depend on its identity. 

Independence follows immediately from the fact that the arguments in $Q$ and $Q'$ are completely independent. The result of the update function 
operating on $Q$ and $Q'$ simultaneously is therefore just the combination of the results of the update function individually operating on $Q$ and $Q'$,
respectively. Similarly, the solution of the combined system of differential equations for $Q$ and $Q'$ is just the combination of the individual solutions for 
$Q$ and $Q'$, respectively.

For Directionality, note from \eqref{eq_equilibrium} that the strength of every argument depends only on the strength of its parents. 
Since there is no path from $b$ to $c$, $b$ cannot be parent of any of $c$'s predecessors. Since the subgraph consisting of $c$'s parents
in $Q$ equals the corresponding subgraph in $Q'$, the interpretation of all arguments in this subgraph must be equal.

For Equivalence, we get from \eqref{eq_equilibrium} that 
$\sigma(a)  
= \frac{1}{1 + \frac{1-\baseScore(a)}{\baseScore(a)}\exp(\sum\limits_{x \in \attacker(a)}  \sigma(x) - \sum\limits_{x \in \supporter(a)}  \sigma(x))}
= \frac{1}{1 + \frac{1-\baseScore(b)}{\baseScore(b)}\exp(\sum\limits_{x \in \attacker(a)}  \sigma(x) - \sum\limits_{x \in \supporter(a)}  \sigma(x))}
 = \sigma(b)$, where we used the assumption that the base scores and the strength values of attackers and supporters of $a$ and $b$ are equal.

For Stability, we get from \eqref{eq_equilibrium} that 
$\sigma(a) = \frac{1}{1 + \frac{1-\baseScore(a)}{\baseScore(a)}\exp(0)}= \baseScore(a)$.

Neutrality follows again from \eqref{eq_equilibrium} by noting that the sums of strength values differ only by $\sigma(d)$, which 
is $0$ by assumption.

Monotony and Strict Monotony follow from \eqref{eq_equilibrium} by observing that additional attackers can only increase the output of
the exponential function in the denominator and thus decrease the strength. Symmetrically, additional supporters can only increase
the strength. In particular, they will increase or decrease the result if their strength is non-zero.

Reinforcement and Strict Reinforcement follow similar by noting that the aggregated sum for $a$ must be less than the aggregated sum
for $b$. Hence, the denominator for $a$ is smaller and thus its strength is larger.

Resilience follows from \eqref{eq_equilibrium} by noting that there is always a finite number of edges so that the outcome of the logistic function
is always strictly between $0$ and $1$.

Franklin follows immediately from \eqref{eq_equilibrium} by noting that the aggregated sums for $a$ and $b$ are equal.

For Weakening, note that the assumptions imply that $\sum\limits_{x \in \attacker(a)}  \sigma(x) - \sum\limits_{x \in \supporter(a)} \sigma(x) > 0$. 
Therefore, we have 
$\sigma(a) < \frac{1}{1 + \frac{1-\baseScore(a)}{\baseScore(a)} \exp(0)} =  \baseScore(a)$.

Strengthening follows symmetrically.
 
To prove Duality, we reorder the terms in the claim and show that $\sigma(a) + \sigma(b) =  \baseScore(a) + \baseScore(b)$.  
To simplify notation, let $A_a =\sum\limits_{x \in \attacker(a)}  \sigma(x) - \sum\limits_{x \in \supporter(a)} \sigma(x)$,
$A_b =\sum\limits_{x \in \attacker(b)}  \sigma(x) - \sum\limits_{x \in \supporter(b)} \sigma(x)$.
Note that the assumptions imply that $A_a = - A_b$. 
Note also that $\baseScore(b) = 0.5 - \epsilon = 1 - (0.5 + \epsilon) = 1 - \baseScore(a)$.
Therefore,
\begin{align*}
&\sigma(a) + \sigma(b) \\
&=  \frac{1}{1 + \frac{1-\baseScore(a)}{\baseScore(a)} \exp(A_a)} +  \frac{1}{1 + \frac{\baseScore(a)}{1-\baseScore(a)} \exp(-A_a)} \\
&= \frac{1 + \frac{\baseScore(a)}{1-\baseScore(a)} \exp(-A_a) + 1 + \frac{1-\baseScore(a)}{\baseScore(a)} \exp(A_a) }
{1 + \frac{\baseScore(a)}{1-\baseScore(a)} \exp(-A_a) +  \frac{1 - \baseScore(a)}{\baseScore(a)} \exp(A_a) + 1} \\
&= 1 = \baseScore(a) + \baseScore(b).
\end{align*} 

For Almost Open-mindedness, note that the assumptions basically say that we add $k$ new attackers (supporters) of $a$ with base score $1$.
Since the new arguments have no parents, their strength is $1$ by Stability. By Directionality, they do not affect the strength of
any of $a$'s parents. Therefore, the new attackers (supporters) will increase (decrease) the sum in the exponential function by $k$. 
Hence, as $k$ goes to infinity, the fraction will go to $0$ ($1$).
\end{proof}

\bibliographystyle{aaai}

\end{document}